\documentclass[10pt,twocolumn,letterpaper]{article}

\usepackage{cvm}
\usepackage{times}
\usepackage{epsfig}
\usepackage{graphicx}
\usepackage{amsmath}
\usepackage{amssymb}
\usepackage{enumitem}
\usepackage[dvipsnames]{xcolor}         

\usepackage[pagebackref=true,breaklinks=true,letterpaper=true,colorlinks,bookmarks=false]{hyperref}

\newcommand{\keywords}[1]{{\bf \emph{Keywords: #1}}}

\definecolor{lightyellow}{RGB}{255, 255, 180}

\definecolor{lightred}{RGB}{0,0,0}
\newcommand{\texthl}[1]{\textcolor{lightred}{#1}}

\cvmfinalcopy

\def\cvmPaperID{****} 

\ifcvmfinal\fi
\usepackage[dvipsnames]{xcolor}
\usepackage{colortbl}
\usepackage{amsthm, amsmath}
\usepackage{booktabs}
\usepackage{arydshln}
\usepackage{graphicx}
\usepackage{subcaption}
\usepackage[capitalize,noabbrev]{cleveref}
\usepackage{enumitem}

\usepackage{algorithm,algorithmic}

\theoremstyle{plain}

\newtheorem{theorem}{Theorem} 
\newtheorem{proposition}{Proposition}

\newtheorem{remark}{Remark}

\definecolor{mygray}{gray}{.9}
\definecolor{myblue}{HTML}{25537D}
\definecolor{myred}{HTML}{E85642}

\newcommand\inc[1]{\textcolor{myred}{$\uparrow$ #1}}

\makeatletter
\def\hlinew#1{%
\noalign{\ifnum0=`}\fi\hrule \@height #1 \futurelet
\reserved@a\@xhline}
\makeatother%

\setlist[itemize]{leftmargin=0.5cm} 
\begin{document}

\title{Learning from Imperfect Text Guidance: Robust Long-Tail Visual Recognition with High-Noise Labels}

\author{
Mengke Li \\
CSSE, Shenzhen University\\
Shenzhen, China\\
{\tt\small mengkeli@szu.edu.cn}
\and
Haiquan Ling \\
CSSE, Shenzhen University\\
Shenzhen, China\\
{\tt\small 2410815024@mails.szu.edu.cn}
\and
Yiqun Zhang\\
{\small SCST, Guangdong University of Technology}\\
Guangzhou, China\\
{\tt\small yqzhang@gdut.edu.cn}
\and
Yang Lu\\
{\small INFORMATICS, Xiamen University} \\
Xiamen, China\\
{\tt\small luyang@xmu.edu.cn}
\and
Hui Huang\thanks{Corresponding author}\\
CSSE, Shenzhen University\\
Shenzhen, China\\
{\tt\small hhzhiyan@gmail.com}
}


\maketitle

\begin{abstract}
Real-world data often exhibit long-tailed distributions with numerous noisy labels, substantially degrading the performance of deep models.
While prior research has made progress in addressing this combined challenge, it overlooks the severe label-image mismatch inherent to high-noise settings, thereby limiting their effectiveness.
Given that observed labels, though mismatched with images, still retain category information, we propose employing auxiliary text information from labels to address label-image inconsistencies in long-tailed noisy data. 
Specifically, we leverage the intrinsic cross-modal alignment in pre-trained visual-language models to correct the label-image inconsistencies. 
This supervisory signal, referred to as Weak Teacher Supervision (WTS), is unaffected by label noise and data distribution biases, albeit exhibits limited accuracy.
Therefore, the activation of WTS is determined by evaluating the discrepancy between text-predicted labels and observed labels. 
Extensive experiments demonstrate the superior performance of WTS across synthetic and real-world datasets, particularly under high-noise conditions.
The source code is available at \href{https://anonymous.4open.science/r/WTS-0F3C}{https://anonymous.4open.science/r/WTS-0F3C}.
\end{abstract}

\keywords{Noisy label learning, Long-tail learning, Pre-trained Model, CLIP}

\section{Introduction}
\label{sec:intro}


With the availability of large-scale public datasets~\cite{ILSVRC15, sun2017JET300, radford2021clip}, significant progress has been made in the field of computer vision~\cite{li2022advances,pang2024heterogeneous} and large models~\cite{radford2021clip,Dosovitskiy21vit}.
However, real-world visual datasets typically exhibit two critical limitations: (1) severe class imbalance, where a small number of head classes dominate the sample distribution while tail classes remain substantially underrepresented~\cite{zhang2023survey}, and (2) pervasive label noise caused by erroneous annotations~\cite{song2019selfie,xiao2015learning}. 
Creating balanced datasets with correctly labeled classes to address these challenges is expensive and unsustainable.
To address these issues, the practical problem of long-tailed noisy label (LTNL) learning~\cite{lu2023tabasco,zhang2023rcal} has been introduced. 

\begin{figure}[!t]
\centering
\begin{subfigure}{0.495\linewidth}
    \includegraphics[width=\linewidth, height=0.6\linewidth]{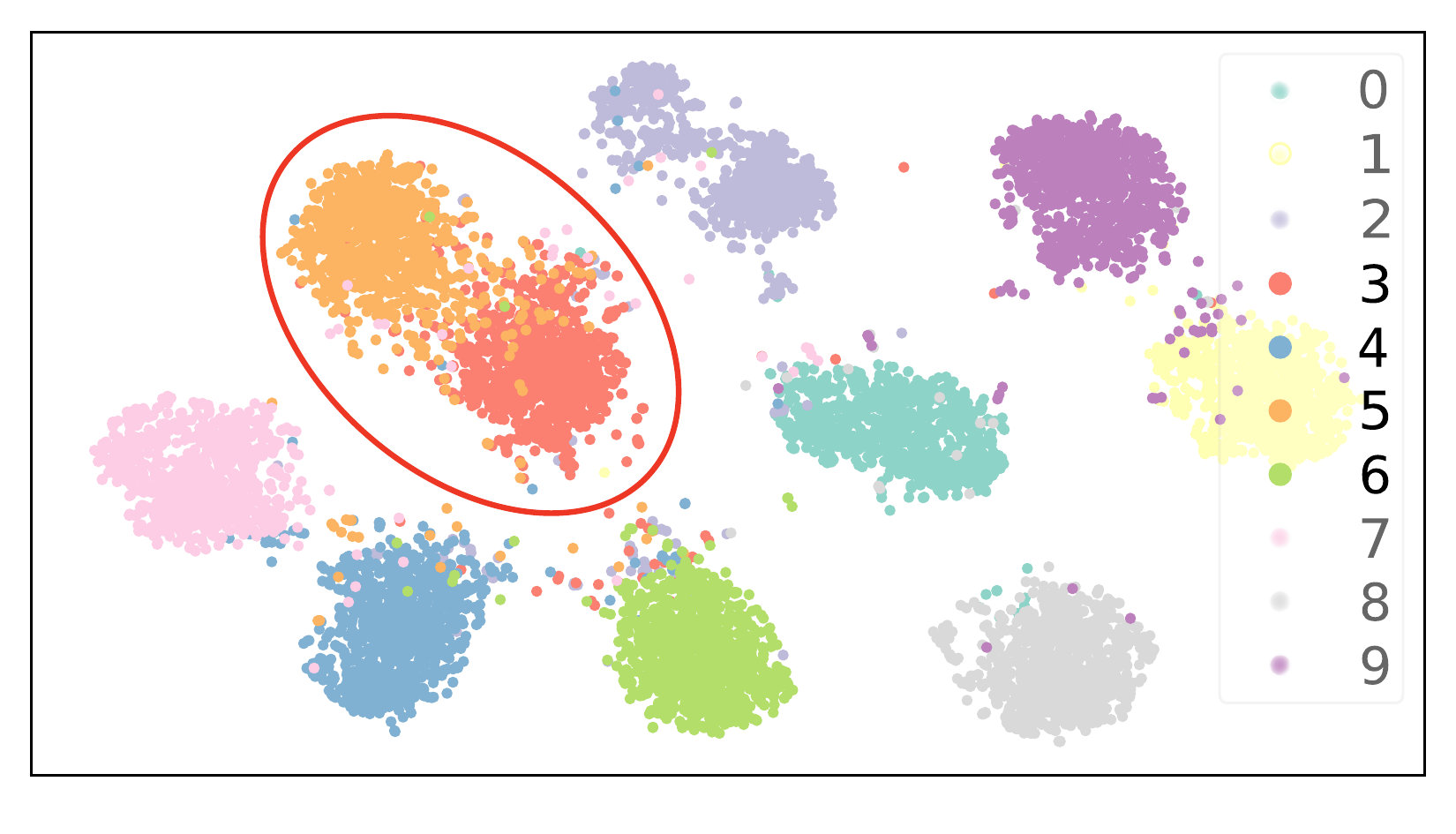}
    \caption{Clean label}
    \label{fig:intro_baseline_a}
\end{subfigure}
\begin{subfigure}{0.495\linewidth}
    \includegraphics[width=\linewidth, height=0.6\linewidth]{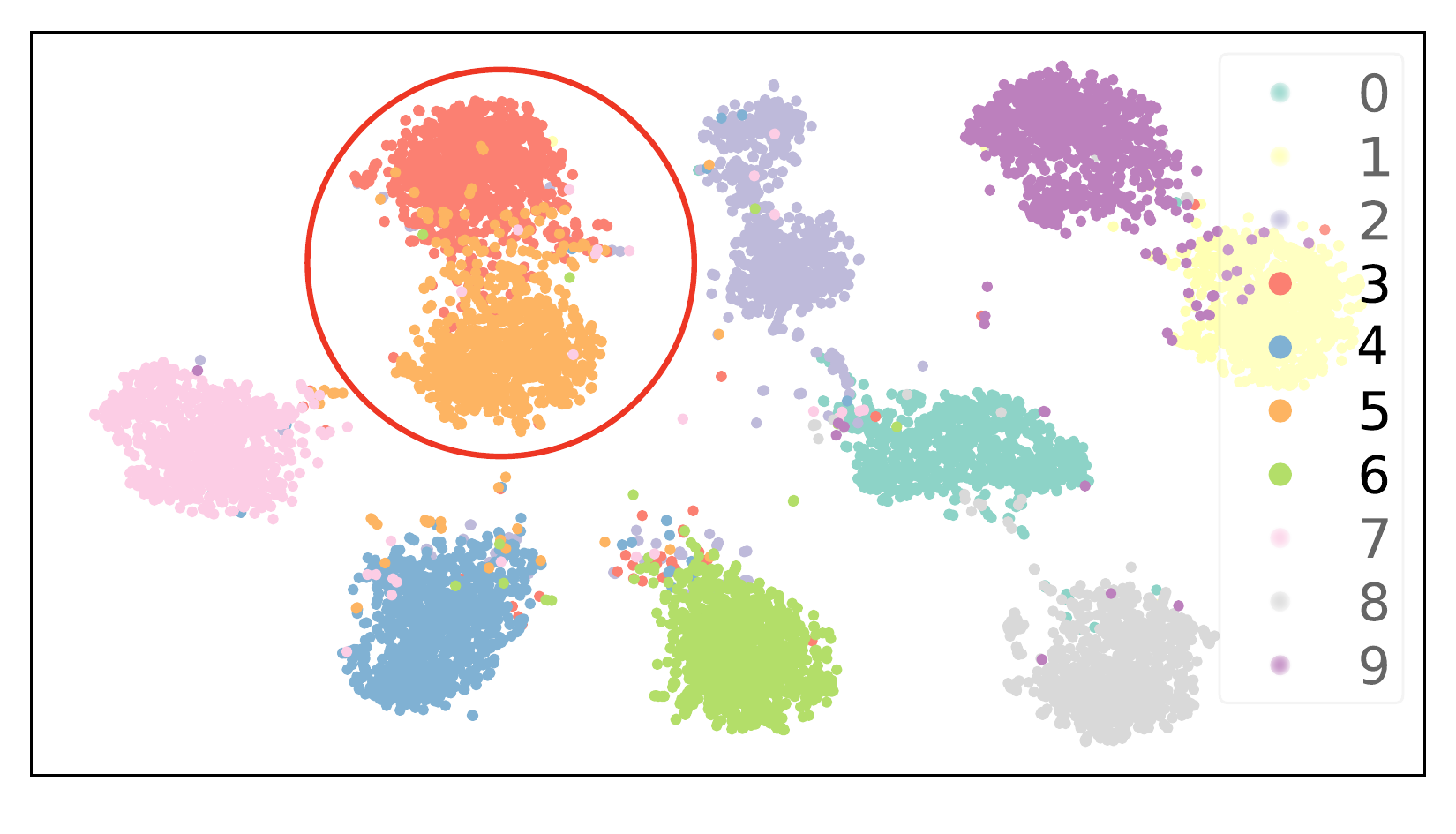}
    \caption{Noisy label (NR=0.1)}
    \label{fig:intro_baseline_b}
\end{subfigure}
\\
\begin{subfigure}{0.495\linewidth}
    \includegraphics[width=\linewidth, height=0.6\linewidth]{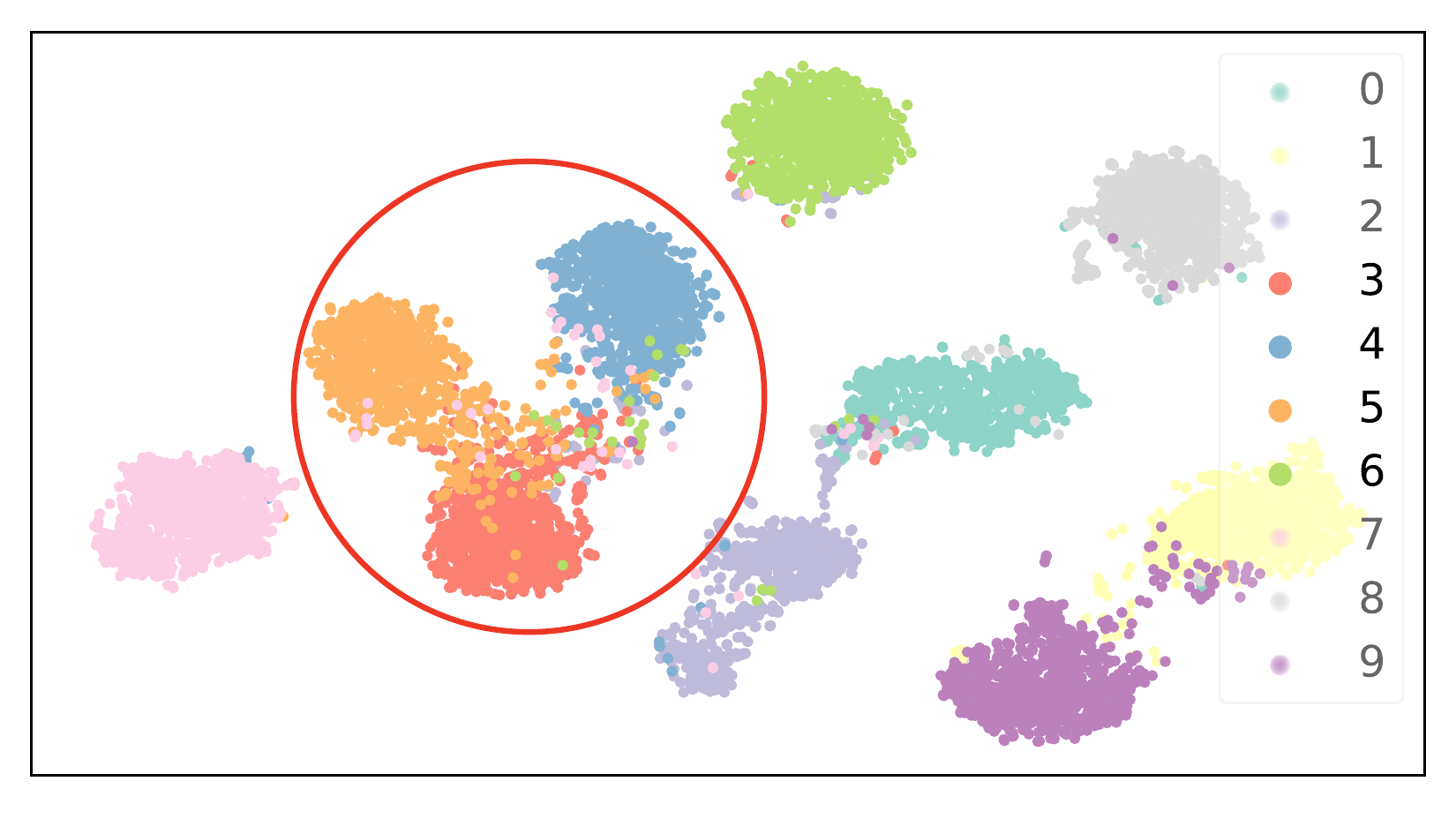}
    \caption{Noisy label (NR=0.5)}
    \label{fig:intro_baseline_c}
\end{subfigure}
\begin{subfigure}{0.495\linewidth}
    \includegraphics[width=\linewidth, height=0.6\linewidth]{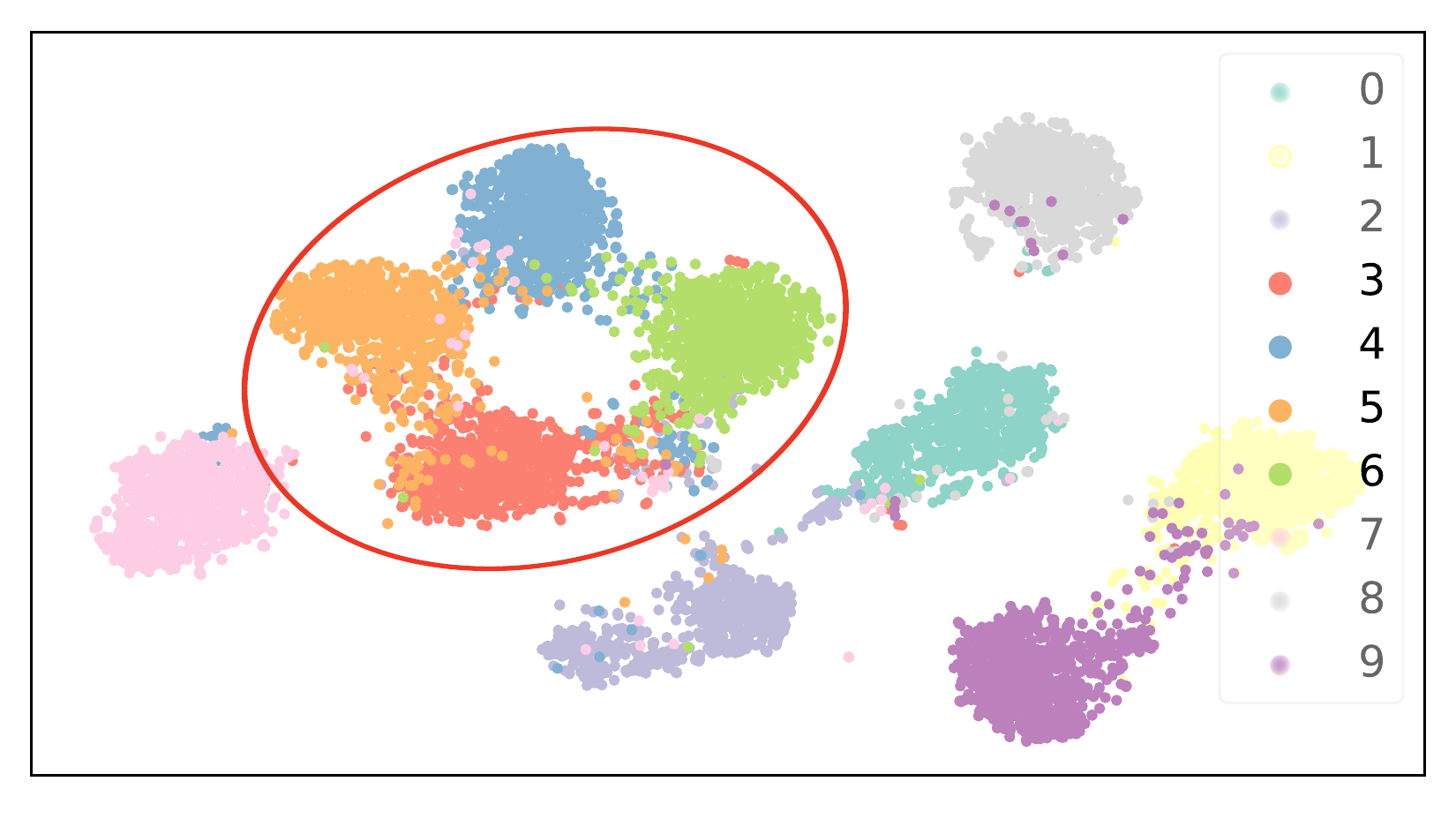}
    \caption{Noisy label (NR=0.6)}
    \label{fig:intro_baseline_d}
\end{subfigure}
\caption{T-SNE visualization of the feature distributions on the test set, obtained by models trained on label-noisy datasets with varying noise rates\protect\footnotemark.
(a) Classes 3 and 5 exhibit overlap.  
(b) Classes 3 and 5 remain partially overlapping. 
(c) Classes 3, 4, and 5 are not entirely separated. 
(d) Classes 3, 4, 5 and 6 are not fully separated. 
}\label{fig:intro_baseline} 
\end{figure}
\footnotetext{\small The training set is CIFAR-10-LTN with asymmetric noise and an imbalance factor of 10. 
The model is Adaptformer~\cite{Chen2022adaptformer}, finetuning on CLIP~\cite{radford2021clip}. 
The loss function employed is logit adjustment~\cite{adjustment21}.
}

Recently, the challenging task of LTNL learning has garnered significant attention.
Broadly, the approaches can be divided into the following categories:
(1) emphasizing the importance of different samples by reweighting or regularization~\cite{Mengye2018Learning, Shu2019mwnet, cao2021heter, wei2022prototypical, jiang2022delving};
(2) selecting clean samples based on carefully designed criteria~\cite{Wei2021robust, xia2022sample, lu2023tabasco, LI2023SFA};
(3) developing improved representation learning methods~\cite{zhou2022prototype, yi2022identifying, zhang2023rcal, li2024extracting}.
The aforementioned methods can effectively enhance the robustness of models on long-tailed noisy labeled data.
However, they overlook the impact of different noise ratios on model training, resulting in suboptimal performance in high-noise scenarios.
Notably, low levels of label noise exert a relatively minimal impact on model performance.
To illustrate this phenomenon, we visualize test set features from models trained on LTNL datasets with varying noise ratios (NRs), as shown in \cref{fig:intro_baseline}.
The feature distributions under low NR closely resemble those of clean labels, as shown in \cref{fig:intro_baseline_a,fig:intro_baseline_b}.
Consequently, long-tailed approaches are less affected in the low noise scenario, as the labels are relatively reliable.
In contrast, when comparing \cref{fig:intro_baseline_c,fig:intro_baseline_d}, we can observe that the feature distributions under high noise ratios differ significantly from those of clean labels. 
Therefore, targeted processing is necessary for high noise ratio scenarios, where unreliable labels constitute one of the primary issues in noisy label learning with long-tailed data.
In such circumstances, noisy labels introduce substantial misleading supervisory signals, making it challenging to effectively distinguish noisy samples from clean ones or improve feature representation.  
This results in accumulated feature learning biases and amplifies the combined challenges of label noise and class imbalance.
To this end, we propose integrating auxiliary linguistic information into supervisory signal calibration, as textual information inherently captures the semantics of the labels themselves and is inherently robust to label noise and data distribution biases in the training set.

Considering that in long-tailed noisy labeled data, the observed labels contain category information but may be inconsistent with the corresponding images, we propose using auxiliary text information from the observed labels to correct these inconsistencies, thereby fully utilizing the label information. 
Specifically, we leverage the text encoder from pre-trained visual-language models (VLMs)~\cite{karpathy2015vse,radford2021clip} to obtain text-based predictions, utilizing this text-image alignment prior to correct label-image inconsistencies.
This text-image alignment prior, serving as a supervisory signal, is not always accurate. 
Therefore, we evaluate the discrepancy between the text-predicted labels from the text encoder and observed labels to decide whether to activate this supervision.
If the predicted labels from the pre-trained text encoder deviate significantly from the observed labels, we consider these text-based predictions to be more informative and incorporate them to guide model training. 
This approach enables the effective application of existing long-tailed learning methods.
Since text-predicted labels generally have lower accuracy than direct fine-tuning of the image encoder, we regard the text encoder as a ``weak teacher'' and refer to our approach as Weak Teacher Supervision (WTS).
Experiments on benchmarks with multiple types of noisy labels and intrinsically long-tailed distributions demonstrate that the proposed WTS improves the performance of the strong student, particularly in scenarios with a high noise ratio.
The main contributions of this paper are summarized as follows:
\begin{itemize}
    \item  
    We empirically demonstrate that even a text encoder from a pre-trained VLM with suboptimal performance can contribute to performance improvements in LTNL learning, and provide an in-depth analysis of the underlying rationale. 
    \item We devise a simple yet effective WTS strategy that integrates seamlessly with various existing methods. 
    It leverages text information to predict image labels and by evaluating the consistency between text-predicted and observed labels, selectively applies supervision to improve label reliability.
    \item Extensive experiments on both simulated and real-world datasets demonstrate the effectiveness of WTS, showing significant performance gains in LTNL learning, especially in challenging high-noise conditions.
\end{itemize}

\section{Related Work}
\label{sec:related}
\subsection{Long-Tail Learning}
Long-tail learning methods typically assume correct labeling within datasets~\cite{cui2019class}.
These methods then apply class-wise operation, generally falling into three main categories~\cite{li2022advances,shi2024LIFT}. 
(1) Input level. 
Data manipulation techniques, such as re-weighting/sampling~\cite{cui2019class} and data augmentation~\cite{CubukED19AutoAugment, CubukED20Randaugment}, are implemented to enhance classification performance.  
(2) Representation level. 
Modifications are made to the model structure to better capture the underlying characteristics of the data. 
\texthl{Decoupling representation~\cite{decouple20, mislas21} and BBN-based methods, where BBN denotes Bilateral-Branch Network~\cite{bbn20, DisAli21}, separate representation learning from classifier training.}
They first extract representations from the original long-tailed dataset, and then retrain the classifier using either class-balanced sampling data~\cite{decouple20} or reverse sampling data~\cite{bbn20}. 
Ensembling learning includes redundant ensembling~\cite{WangXD21RIDE, LiBL2022Trustworthy, liJ2022nested, Cai2021ACE}, which aggregates outputs from separate classifiers or networks within a multi-expert framework, and complementary ensembling~\cite{bbn20, Cui2022reslt}, which involves the statistical selection of different data partitions. 
(3) Output level.
Existing methods enhance model representation and refine the classifier by calibrating model logits based on specific criteria. 
For example, logit adjustment methods~\cite{adjustment21, RenJW2020Balanced} calibrate the predicted output distribution to achieve a balanced distribution. 
Re-margining methods~\cite{Kaidi2019ldam, LiMK2022KPS, adjustment21, LiMK2022GCL} introduce class size-based constants that assign larger margins to tail classes compared to head classes. 


\subsection{Noisy Label Learning}
Noisy label supervision with high-noise datasets critically compromises model recognition performance.
A straightforward and effective approach is to distinguish between clean and noisy samples, with methods such as MentorNet~\cite{2018Mentornet}, Co-teaching~\cite{han2018Co-teachingt} and DivideMix~\cite{li2020Dividemix} treating samples with small training losses as clean samples.
FedFixer~\cite{xinyuan2024fedfixer} introduces the personalized model that collaborates with the global model to effectively select clean and client-specific samples, and NPN~\cite{meng2024npn} directly corrects labels by accumulating model predictions.
Divergence-based approaches rely on margin metrics, such as AUM~\cite{pleiss2020ide} which measures the logit differences between a specified class and the top non-specified class, or distribution similarity employed by methods like Jo-SRC~\cite{Yao2021Jo-src} and UNICON~\cite{Karim2022Unicon} via Jensen-Shannon divergence.
In addition, several methods design noise-robust loss functions to mitigate the impact on noisy data, such as backward and forward loss correction~\cite{Patrini2017mak}, gold loss correction~\cite{Dan2018using}, MW-Net~\cite{Shu2019mwnet} and Dual-T~\cite{Yao2020dual}.
Other effective methods focus on evaluating the noise transfer matrix~\cite{Yao2020dual, Cheng2022class, yexiong2024llcs} or reweighting examples for noisy label learning~\cite{Mengye2018Learning, liu2015class}.

\subsection{Noisy Label Learning on Long-Tailed Data}
Numerous studies have emerged to address the challenges posed by the task of joining noisy labels and unbalanced/long-tailed data.
A common strategy is to distinguish between clean and noisy samples.
For example, CNLCU~\cite{xia2022sample} improves upon the small loss method~\cite{han2018Co-teachingt} by identifying a subset of high-loss samples as clean.
TABASCO~\cite{lu2023tabasco} addresses a complex scenario where noisy labels can cause an intrinsic tail class to be misrepresented as a head class.
To solve this issue, it proposes a bi-dimensional separation metric that effectively adapts to different cases. 
Another promising path is to emphasize the importance of different samples by reweighting or regularization~\cite{Mengye2018Learning, Shu2019mwnet, cao2021heter, wei2022prototypical, jiang2022delving}.
Concurrently, improving representation learning~\cite{zhou2022prototype, yi2022identifying, zhang2023rcal, li2024extracting} leverages intrinsic feature-space structures to mitigate noise propagation.
However, in high-noise environments, the aforementioned methods fall short because noisy labels undermine sample reliability and obscure distinctions between noisy and tail-class samples. 
Moreover, label noise distorts feature space, complicating the utilization of feature-based strategies to address both noise and class imbalance.
\section{Proposed Method: WTS - a Weak yet Effective Teacher}
Noisy labels weaken the reliability of the supervision signal from observed labels, especially at high noise ratios, leading to accumulated biases in feature learning and exacerbating the challenges of label noise and class imbalance.
Fortunately, observed labels still provide category names and counts, making external label support valuable. 
Recent advancements in visual-language models (VLMs)~\cite{karpathy2015vse,radford2021clip} provide a powerful tool for incorporating label information.
To achieve this process, we introduce prediction probabilities from the text encoder of VLM as auxiliary text supervision during model training, referred to as WTS. 
Since WTS may introduce additional errors, we use a switch to determine whether to activate it based on the overlap ratio between observed and text-predicted labels.
The overall structure of the WTS is illustrated in \cref{fig:method}. 

\subsection{Preliminaries}
\noindent\textbf{Problem Definition.}
Consider a training set $\mathcal{D} = \{ (x_i, \hat{y}_i) \}_{i=1}^N$, where each $(x_i, \hat{y}_i)$ pair represents an input and its observed label, and $N$ is the total number of samples in $\mathcal{D}$. 
Suppose $\mathcal{D}$ includes $C$ classes, with class $c$ having $n_c$ training samples. 
Then, the total number of samples is given by $N = \sum_{c=1}^{C} n_c$.
The training set $\mathcal{D}$ exhibits the following properties:
1) Noisy labels. The observed label $\hat{y}_i \in \mathcal{\hat{Y}}$ may be different from the ground truth $y_i \in \mathcal{Y}$. $\mathcal{Y}$ is unavailable.
2) Long-tailed distribution. Without loss of generality, we arrange the classes in descending order by training sample count, so that $n_{1} > n_{2} > \ldots > n_{C}$ with $n_1 \gg n_{C}$.
This learning task is defined as long-tailed noisy label (LTNL) learning~\cite{lu2023tabasco,zhang2023rcal}.
Property 1 results in a distribution derived from $\mathcal{\hat{Y}}$ that is inconsistent with $\mathcal{Y}$. 
Existing long-tailed learning methods typically rely on precise sample counts for each class to effectively adjust logits and/or select suitable structures.
As a result, these methods are inadequate for addressing Property 2, as inaccuracies in category counts can cause over-regularization in certain classes.

\noindent\textbf{Basic Notation.}
In the following sections, scalars are represented by lowercase letters, while vectors are denoted by lowercase boldface letters.
Sets or distributions are represented by uppercase script letters.
The superscripts $I$\footnote{To avoid confusion with the index (subscript $i$), the output of the fine-tuned image encoder is represented by the capital letter $I$.}, $t$ and $o$ are used to differentiate the outputs obtained from the fine-tuned image encoder, the pre-trained text encoder, and the observed labels, respectively. 

\noindent\texthl{\textbf{Logit Adjustment (LA).}
LA~\cite{adjustment21} is a statistically grounded approach that addresses class imbalance by modifying classifier logits based on label frequencies. Formally, given the original logit $z_i$ and the estimated class prior $\pi_i$, the adjusted logit is computed as $\tilde{z}_i = z_i + \log \pi_i$. This adjustment acts as a relative margin that effectively penalizes majority classes during optimization, thereby enforcing consistency with the balanced error rate and improving recognition on tail categories.}

\begin{figure*}[!t]
\centering
\includegraphics[width=1.\textwidth]{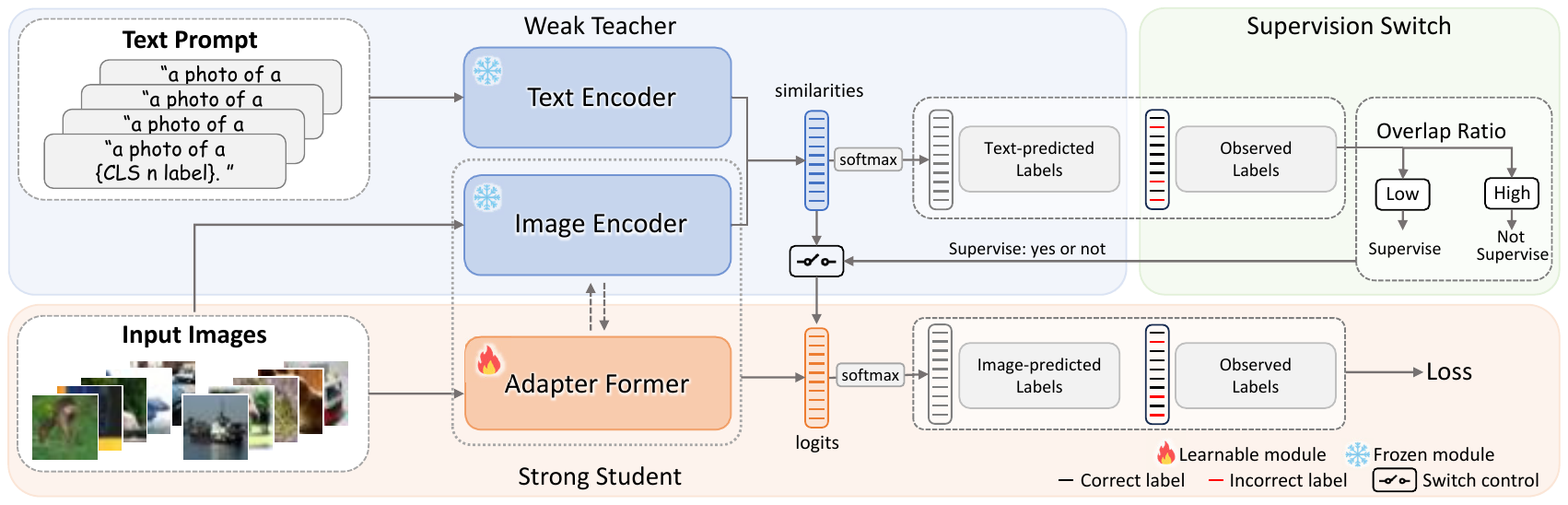}
\caption{Overview of WTS. 
 We leverage the text encoder in pre-trained visual-language models to obtain text-based predictions, using text-image alignment to correct label-image inconsistencies. Since this supervisory signal is not always accurate, we evaluate the discrepancy between the text-predicted and observed labels to determine when to activate it.}
\label{fig:method}
\end{figure*}

\subsection{Weak Teacher Supervision} \label{sec:WTS}

\noindent\textbf{Supervision from Linguistic Information.}
The text and image encoders in a pre-trained VLM can provide text embeddings ($\mathbf{t}_{c}$) of labels for class $c$ and image embeddings ($\mathbf{f}_{i}$) for input images ${x_i}$.
By comparing the similarity between $t_c$ and $f_i$, we can derive the predicted label from the label-based text prompt:
\begin{equation}
   s_{i,c}  =  \frac{{ \mathbf{t}_{c}^T \mathbf{f}_{i}}}{ \|\mathbf{t}_{c}\| \|\mathbf{f}_{i}\|}, \: 
   y_{i} ^t= \mathop{\arg\max}\limits_{c \in [C]} \{s_{i,c}\}^C_{c=1},  
\label{eq:sim}
\end{equation}
where $y_i^t$ is the text-predicted label for input image $x_i$.
Subsequently, a straightforward solution is to integrate the VLM text-predicted label (TL) with the observed label (OL) into the training to provide auxiliary supervision:
\begin{equation}
\label{eq:loss1}
    \mathcal{L} = a \underbrace{\mathcal{L}_{O}\left(x_i,\hat{y}_i\right)}_{\text{OL supervision}} + (1-a) \underbrace{\mathcal{L}_{T}\left(x_i,{y}^t_i\right)}_{\text{TL supervision}},
\end{equation}
where $a$ is a hyper-parameter. 
$\mathcal{L}_{O}$ represents the loss calculated on observed labels, and can employ cross-entropy (CE) loss or existing logit adjustment methods, such as LDAM~\cite{Kaidi2019ldam}, LA~\cite{adjustment21}, and LADE~\cite{Hong2021lade}, for long-tailed learning.
Meanwhile, $\mathcal{L}_{T}$ is the loss based on text-predicted labels.
\cref{eq:loss1} can be utilized to fine-tune a VLM.
However, the text-predicted labels $\mathcal{Y}^t = \{y_i^t\}^N_{i=1}$ also imprecise, for instance, its accuracy on the test set of CIFAR-100 is only 64.4\% (additional results can be found in \cref{sec:cifar_train}), indicating that $\mathcal{L}_{T}\left(x_i,{y}_{T,i}\right)$ introduces another form of label noise.
To address this dilemma, we propose leveraging feature similarity between text and image to provide additional supervision.
Since it can provide not only discrete one-hot optimization objectives but also insights into inter-class relationships.
In detail, we utilize softmax to convert the similarity between the label text features and the input images (as shown in \cref{eq:sim}) into probabilities:
\begin{equation}
   \texthl{p^t(x_i|y=c) = \frac{\exp(s_{i,c})}{\sum_{j=1}^C \exp(s_{i,j})}.}
\end{equation}
For convenience and without loss of generality, for the input $x_i$, we abbreviate this as $p^t_{c}$. 
Similarly, the probability $p^I_{c}$ for the input image can be derived from the similarity between the fine-tuned image features and the classifier weights. 
Following, we can incorporate the text supervision information from the pre-trained VLM into the training process by minimizing the divergence between image and text prediction probability distributions. 
Kullback-Leibler Divergence ($\text{KL}$) is employed in this paper:
\begin{equation}
\label{eq:kl}
    \mathcal{L}_T = \text{KL}(\mathcal{P}^t\| \mathcal{P}^I),
\end{equation}
where $\mathcal{P}^t = \{p^t_{c}\}_{c=1}^C$ and $\mathcal{P}^I = \{p^I_{c}\}_{c=1}^C$ are the probability distributions of text-prediction and fine-tuned image encoder prediction, respectively. 
Since the fine-tuned model has fewer training parameters and has the ability to quickly adapt to new datasets, we treat the image encoder that fine-tuned on LTNL data as a strong student, while the pre-trained VLM serves as a weak teacher and provides weak teacher supervision (WTS).
\texthl{Regarding the temperature scale used in the KL divergence, we implemented it as a learnable parameter rather than a fixed scalar. This allows the model to adaptively adjust the sharpness of the probability distribution during training.}

\noindent\textbf{Supervision Switch Control.}
Since the weak teacher is not always accurate, and as discussed in \cref{sec:intro}, observed labels can be used directly in low-noise scenarios without additional processing.
The challenge, however, lies in the fact that the proportion of noisy labels cannot be determined in advance.
Therefore, an indicator is needed to assess when the teacher provides effective supervision.
\texthl{For a mini-batch of size $B$, the overlap ratio is defined as
$OR = \frac{1}{B} \sum_{i=1}^{B} \mathbf{1} \left( y_i^t = \hat{y}_i \right)$,
where $y_i^t$ and $\hat{y}_i$ denote the text-predicted label and the observed label of the $i$-th sample in the batch, respectively, and $\mathbf{1}(\cdot)$ is the indicator function. 
The overlap ratio $OR$ measures the batch-wise agreement between text-predicted and observed labels and serves as an indicator for activating weak teacher supervision.}
When the overlap ratio $OR$ is high, it indicates that the two types of labels are largely consistent, and the information provided by WTS is limited. 
Therefore, we opt to deactivate it. 
In contrast, when $OR$ is low, the observed labels and the visual-language alignment prior are significantly different, indicating a need for auxiliary supervision.
Then, the supervision switch based on $OR$ can be calculated as:
\begin{equation}
a = 
\begin{cases} 
    1 & \text{if } OR \geq \tau \\
    a \sim \text{Beta}(\alpha, \beta) &  \text{if } OR < \tau 
\end{cases},
\end{equation}
where $\tau$ is the overlap ratio control threshold, which we will empirically analyze in detail in \cref{sec:Fur_Ana}.
\texthl{We used $\alpha = 2.0$ and $\beta = 2.0$ for the Beta distribution.}
We estimate this value in an online manner by calculating the overlap rate between the two types of labels in each batch.
When WTS needs to be turned off, $a=1$, and only $\mathcal{L}_O$ is included in \cref{eq:loss1}. 
Conversely, when the switch control of WTS is turned on, we set $a$ to a random number sampled from the beta distribution.
In this paper, we choose Adapterformer~\cite{Chen2022adaptformer} for VLM fine-tuning.

The algorithm of WTS is summarized in \cref{alg:wts}.

\begin{algorithm}[t]
\caption{Training Algorithm of WTS}
\label{alg:wts}
\begin{algorithmic}[1]
\REQUIRE {Training set $\mathcal{D}$, pre-trained model $\mathcal{M}$;} 
\ENSURE  {Fine-tuned model;}
\STATE   Initialize the fine-tuning module $\phi$;
    \FOR{$k=1$ to $K$}
    \STATE Sample batches data $\mathcal{B}_e \sim \mathcal{D}$; 
    \STATE Compute text-predicted labels $\mathcal{\hat{Y}}^t_{\mathcal{B}_e}$ by~\cref{eq:sim};
    \STATE Compute $OR$ between $\mathcal{\hat{Y}}^t_{\mathcal{B}_e}$ and $\mathcal{\hat{Y}}^o_{\mathcal{B}_e}$;
    \IF{$ OR < \tau $}
        \STATE Compute the TL supervision: $\mathcal{L}_T = \text{KL}(\mathcal{P}^t\| \mathcal{P}^I)$; 
        \STATE Sample $a$ by $a \sim \text{Beta}(\alpha, \beta)$ and compute the final loss by $\mathcal{L}=a \mathcal{L}_{O}+(1-a)\mathcal{L}_{T}$;
    \ELSE
        \STATE Compute the final loss by $\mathcal{L}= \mathcal{L}_{O};$
    \ENDIF
    \STATE Update $\phi$ by $\phi_{k+1} = \phi_{k} - \eta_k \cdot \nabla \mathcal{L}$      
    \ENDFOR
\end{algorithmic}    
\end{algorithm}

\begin{remark}
\label{rem:adv}
The advantages of WTS can be summarized in three main aspects:
 \textbf{(1) Label noise-robust feature calibration.} 
The pre-trained text encoder in VLM, which is unaffected by noisy labels, exclusively focuses on the semantics of the labels and is inherently aligned with visual features, serves as the teacher model to rectify biases in the features obtained by the student model. 
Notably, we observe that fine-tuning with $\mathcal{L}_O$ can sometimes outperform text-prediction, which often has lower accuracy.
Nevertheless, WTS still provides valuable guidance in correcting misalignments introduced by noisy labels.
\textbf{(2) Sample distribution resilient bias corrector.}
WTS predictions are distribution-agnostic, enabling them to effectively mitigate long-tail bias in samples by propagating distribution-independent reference gradients.
This approach helps mitigate potential classification errors that severely affect tail classes and improves the performance of all classes, including both head and tail.
\textbf{(3) Highly efficient training.}
WTS introduces minimal computational overhead, with the primary cost arising from the fine-tuning of the image encoder. 
\end{remark}

\subsection{Effectiveness Analysis of WTS} \label{sec:rationale}
Although the proposed WTS may seem intuitive and straightforward at first glance,  it is built on a solid theoretical foundation. 
In this section, we explore the underlying theoretical rationale behind WTS. 
The efficacy of the proposed method is analyzed from the perspectives of noisy label learning and long-tail learning.

\vspace{0.5em}
\noindent\textbf{Effectiveness on Noisy Label Learning.}
We investigate the impact of WTS on noisy label learning by analyzing how $\mathcal{L}_T$ revise incorrectly observed labels, leading to the following proposition.

\begin{proposition}
\label{prop:NL}
   WTS corrects observed labels based on the predicted probabilities provided by the pre-trained VLM with the ratio $a$. 
\end{proposition}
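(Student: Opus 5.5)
The plan is to make the informal statement precise by showing that, when the switch is on, the WTS objective in \cref{eq:loss1} equals (up to a constant independent of the student parameters) a single cross-entropy against a \emph{corrected} soft label. This corrected label is the convex combination $a\,\mathbf{e}_{\hat{y}_i}+(1-a)\mathcal{P}^t$, where $\mathbf{e}_{\hat{y}_i}$ is the one-hot vector of the observed label. I take $\mathcal{L}_O$ to be the cross-entropy $-\log p^I_{\hat{y}_i}$. For the logit-adjusted variants (LA, LDAM, LADE) the same argument runs with $p^I$ replaced by the adjusted softmax, since those losses are still cross-entropies on modified logits.

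The first step expands the KL term of \cref{eq:kl}:
\begin{equation*}
\mathcal{L}_T=\sum_{c=1}^C p^t_c\log p^t_c-\sum_{c=1}^C p^t_c\log p^I_c .
\end{equation*}
The first sum is the negative entropy of the frozen teacher, which does not depend on the fine-tuned parameters $\phi$.

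Substituting into $\mathcal{L}=a\mathcal{L}_O+(1-a)\mathcal{L}_T$ gives
\begin{equation*}
\mathcal{L}=-\sum_{c=1}^C\bigl(a\,\mathbb{1}(c=\hat{y}_i)+(1-a)p^t_c\bigr)\log p^I_c+\text{const}.
\end{equation*}
Writing $\tilde{y}_{i,c}=a\,\mathbb{1}(c=\hat{y}_i)+(1-a)p^t_c$, this shows that $\mathcal{L}=\text{CE}(\tilde{\mathbf{y}}_i,\mathcal{P}^I)+\text{const}$. Because the weights are nonnegative and sum to one, $\tilde{\mathbf{y}}_i$ is a valid distribution. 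It is exactly the observed label corrected toward the VLM prediction with mixing ratio $a$, which is the claim.

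To make the notion of ``correction'' concrete, I would next compute the gradient with respect to the student logits $z_c$. For softmax plus cross-entropy this gradient is $p^I_c-\tilde{y}_{i,c}$, so gradient descent drives $\mathcal{P}^I$ toward $\tilde{\mathbf{y}}_i$.

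I would then discuss the cases. If $\hat{y}_i$ is wrong and the true class $y_i$ has large $p^t_{y_i}$, the corrected target places mass $(1-a)p^t_{y_i}$ on the true class. The mass on the wrong label also drops from $1$ to $a+(1-a)p^t_{\hat{y}_i}$. When the switch is off ($a=1$), we get $\tilde{\mathbf{y}}_i=\mathbf{e}_{\hat{y}_i}$ and no correction occurs, which is consistent with the switch rule. Since $a\sim\text{Beta}(2,2)$, I can also take the expectation over $a$: $\mathbb{E}[a]=1/2$, giving an expected target that is the equal mixture of the observed label and the text prediction.

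The main obstacle is the learnable temperature in the KL term. If the temperature sits inside $\mathcal{P}^I$, the algebra above is unchanged after replacing $p^I$ with its tempered version. If instead it also scales $\mathcal{P}^t$, the entropy term is no longer constant with respect to all trainable parameters. I would handle this by treating the temperature as fixed within each step, or by noting that it only rescales the teacher distribution and so still yields a convex-combination target. A second subtlety is that the KL gradient differs from the CE gradient by a factor of the temperature; I would absorb this into $a$ and state the result up to this rescaling.
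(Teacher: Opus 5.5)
Your proposal follows the paper's proof: expand $\text{KL}(\mathcal{P}^t\|\mathcal{P}^I)$ as a cross-entropy minus the teacher entropy, drop the entropy because the VLM is frozen, and merge with the CE on the observed label into a single cross-entropy against $a\,p^o_c+(1-a)\,p^t_c$. Your sign $\sum_c p^t_c\log p^t_c=-H(\mathcal{P}^t)$ is the correct one, and the paper's $+H(\mathcal{P}^t)$ is a harmless slip since the term is constant; your one claim beyond the paper, that the same merge works for LA/LDAM/LADE, holds only if the KL term uses the same adjusted softmax as $\mathcal{L}_O$ (and LADE adds a regularizer that is not a cross-entropy), so it is cleaner to state the result for a CE-based $\mathcal{L}_O$, as the paper does.
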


\begin{proof}
Another form to write \cref{eq:kl} is:
\begin{align}
    \text{KL}(\mathcal{P}^t\| \mathcal{P}^I) &=  -\sum p_c^t \log p_c^I - \left( -\sum p_c^t \log p_c^t \right) \notag \\
    & = -\sum p_c^t \log p_c^I + H(\mathcal{P}^t), \label{eq:p_c}
\end{align}
where $H(\cdot)$ represents cross entropy. 
$\mathcal{P}^t$ is provided by the pre-trained VLM.
Since the VLM parameters are not updated during training, $H(\mathcal{P}^t)$ can be considered a constant throughout the training process.
Therefore, this term can be ignored when optimizing the loss function.
Without loss of generality, in \cref{eq:loss1}, by substituting $\mathcal{L}_T$ with \cref{eq:p_c} and replacing $\mathcal{L}_O$ with the expanded form of the CE-based loss, we can obtain:
\begin{align}
\label{eq:loss}
    \mathcal{L} &= a  \left(-\sum_{c=1}^C  p_c^o \log p_c^I \right) + (1-a) \left(-\sum_{c=1}^C  p_c^t \log p_c^I \right), \notag \\
    & = -\sum_{c=1}^C \left( a\cdot p_c^o+ (1-a)\cdot p_c^t \right)  \cdot \log p_c^I , 
\end{align}
where $p_c^o$ is the one-hot-form probability obtained based on the observed labels.
\end{proof}
\noindent \cref{prop:NL} illustrates that WTS has the following two impacts on noisy labels:  
\begin{itemize}[itemsep=-10pt, topsep=0pt, parsep=0pt, partopsep=0pt]
\item For $\hat{y}_i=y_i$, WTS modifies the observed labels through label smoothing, enabling the preservation of all inter-class relationships, in contrast to relying solely on $\mathcal{Y}^t$; \\
\item For $\hat{y}_i \neq y_i$, WTS prevents over-confidence arising from prediction errors~\cite{li2020Dividemix} by decreasing the probability assigned to the incorrect target class.
\end{itemize}  

\noindent\textbf{Effectiveness on Long-Tail Learning.}
We investigate the influence of WTS on long-tail learning from a gradient-based perspective.
Before proceeding, we introduce a theorem, a used symbol, and a remark in our analysis.

\begin{theorem}
\label{thm:gra}
Let $p$ be the base probability and $q$ be the probability obtained from the softmax function applied to logits $\mathcal{Z}=\{z_i\}^C_{i=1}$ that $q_i = \dfrac{e^{z_i}}{\sum_{j=1}^{C}e^{z_i}}$. 
The cross-entropy loss is  $\ell = - \sum_{i=1}^C p_i \log q_i$.
Then, the derivative of the loss function with respect to the logits $z_i$ is:
\begin{equation}\label{eq:thm}
\frac{\partial \ell}{\partial z_i} = q_i - p_i,
\end{equation}
\texthl{where $p_i$ denotes the target probability of class $i$.}
\end{theorem}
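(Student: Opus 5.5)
Since \cref{eq:thm} is the standard gradient of softmax cross-entropy, we will obtain it by a direct chain-rule computation. We read the denominator in the statement as $\sum_{j=1}^C e^{z_j}$; the index $e^{z_i}$ there is a typo. We also use two standing assumptions. First, $p$ is a probability vector, so $\sum_{j} p_j = 1$; this covers both the one-hot $p^o$ and the text distribution $p^t$ from \cref{prop:NL}. Second, $p$ does not depend on $\mathcal{Z}$, which holds here because the VLM text encoder is frozen.

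\textbf{Step 1: derivative of the softmax.} We first compute the Jacobian of the softmax map. Writing $S=\sum_{k} e^{z_k}$ and differentiating $q_j = e^{z_j}/S$ with the quotient rule gives
\begin{equation*}
\frac{\partial q_j}{\partial z_i} = q_j(\delta_{ij} - q_i),
\end{equation*}
where $\delta_{ij}$ is the Kronecker delta. We split into the cases $j=i$, which gives $q_i - q_i^2$, and $j\neq i$, which gives $-q_jq_i$.

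\textbf{Step 2: log-softmax and the chain rule.} It is cleaner to avoid the Jacobian and use $\log q_j = z_j - \log S$ directly. This gives
\begin{equation*}
\frac{\partial \log q_j}{\partial z_i} = \delta_{ij} - q_i.
\end{equation*}
Substituting into $\ell = -\sum_j p_j \log q_j$ yields
\begin{equation*}
\frac{\partial \ell}{\partial z_i} = -\sum_j p_j(\delta_{ij} - q_i) = -p_i + q_i \sum_j p_j.
\end{equation*}

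\textbf{Step 3: normalization.} Finally, $\sum_j p_j = 1$ turns the last expression into $q_i - p_i$, which is \cref{eq:thm}.

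The only step that needs care is this normalization. If $p$ were an unnormalized target, the gradient would instead be $q_i\sum_j p_j - p_i$. The same issue matters for the mixed target in \cref{eq:loss}. The combined target $a\,p^o + (1-a)\,p^t$ still sums to one for every $a\in[0,1]$, so the theorem applies to it and gives gradient $q_i - \bigl(a\,p^o_i + (1-a)\,p^t_i\bigr)$. This form is the one the subsequent long-tail analysis will use. We will state both assumptions explicitly in the proof.
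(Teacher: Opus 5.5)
Your proposal is correct and takes essentially the same route as the paper: a direct chain-rule computation through the softmax, finished by using the normalization of $p$. The paper splits the sum into the $i=c$ and $i\neq c$ terms to reach $q_c\sum_{i\neq c}p_i - p_c(1-q_c)$ and then applies $\sum_{i\neq c}p_i = 1-p_c$; your identity $\partial \log q_j/\partial z_i = \delta_{ij}-q_i$ folds that case split into a Kronecker delta, which is only tidier bookkeeping of the same calculation.
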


\begin{proof}
For a specific class, we assume, without loss of generality, that class  $c$ is considered.
Then, the derivative on $z_c$ is given by:
\begin{align}
\frac{\partial \ell}{\partial z_c} = & -\left( p_c \frac{ \sum_j e^{z_j} }{e^{z_c} } \cdot \frac{e^{z_c} \sum_j e^{z_j}-e^{z_c}e^{z_c} }{\left(\sum_j e^{z_j}\right)^2}\right. \\
+ & \left. \sum_{i\neq c} p_i \frac{ \sum_j e^{z_i} }{e^{z_c} } \cdot \frac{-e^{z_i} e^{z_c}}{\left(\sum_j e^{z_j}\right)^2 }\right) \notag \\
=&-\left( p_c \frac{\sum_je^{z_j}-e^{z_c}}{\sum_j e^{z_j}} -\sum_{i \neq c}p_i \frac{e^{z_c}}{\sum_j e^{z_j}} \right) \notag \\
=& q_c\sum_{i\neq c}p_i -p_c(1-q_c). \label{eq:grd1}
\end{align}
According to the property that the sum of a probability distribution is 1, we have $\sum_{i\neq c}p_i = 1-p_c$.
Substituting this into \cref{eq:grd1}, we get:
\begin{align}
\frac{\partial \ell}{\partial z_c} & =  q_c(1-p_c)-p_c(1-q_c) \notag \\
& = q_c-p_c. 
\end{align}
By substituting $c$ for $i$, we obtain \cref{eq:thm}.
\end{proof}

\begin{remark}
\label{rem:LA}
The gradient of logit adjustment methods reduces the positive signal contributions from head classes while amplifying those from tail classes.
\end{remark}  
\begin{proof}
We compare the gradient difference $d_g$ for the target class between the LA loss ($\mathcal{L}_{LA}$) and the CE loss ($\mathcal{L}_{CE}$) to analyze how LA adjusts the gradient during model training\footnotemark[6].
According to \cref{thm:gra}, the gradient differences is:
\begin{align}
d_g &= \left(q^{LA}_y -1\right) - \left(q^{CE}_y-1\right)  \notag \\ 
&=\frac{e^{z_y-m_y}}{\sum_j e^{z_j - m_j} } - \frac{e^{z_y}}{\sum_j e^{z_j}}  \notag \\ 
& = \frac{\sum_je^{z_j}\cdot e^{z_y-m_y} - \sum_j e^{z_j - m_j}\cdot e^{z_y}}{\sum e^{z_j - m_j} \sum e^{z_j}}  \notag \\ 
& = \frac{e^{z_y}\left( \sum_j e^{z_j-m_y} - e^{z_j-m_j} \right) }{\sum e^{z_j - m_j} \sum e^{z_j}}. \label{eq:remark1}
\end{align}
The denominator of \cref{eq:remark1} is always positive, thus the sign of $d_g$ is determined entirely by the numerator.
If $y$ belongs to the tail class, in the extreme case where $y$ is the smallest class, $m_y$ exceeds that of the other classes.
Therefore, $d_g<0$, and then $\frac{\partial\mathcal{L}_{LA}}{\partial z_y} < \frac{\partial\mathcal{L}_{CE}}{\partial z_y}$.
Gradient descent updates the parameters by subtracting the gradient to minimize the objective function.
As a result, the positive signal ($1-q_y \geq 0$) for the target class in tail classes is amplified compared to the base loss (CE loss).
For head classes, the opposite holds, that is, the positive signal corresponding to the target class is reduced in tail classes.
\end{proof}
\noindent For the notation, we define the modified probability $p^m_c$ for class $c$ as follows:
\begin{equation}
    p^m_c = a\cdot p^o_c+ (1- a) \cdot p^t_c.
\end{equation}
\cref{thm:gra} gives that the derivatives of $\mathcal{L}_O$ and $\mathcal{L}$ (\cref{eq:loss}) with respect to the logit of the target class are: 
\begin{equation}
\frac{\partial \mathcal{L}_O}{\partial z_y} = p_y^I-1, \: 
\frac{\partial \mathcal{L}}{\partial z_y} = p_y^I-p_y^m. 
\end{equation}

On the one hand, during optimization, the gradient is updated by descending in the opposite direction of the gradient.
Therefore, compared to $\dfrac{\partial \mathcal{L}_O}{\partial z_y}$, $\dfrac{\partial \mathcal{L}}{\partial z_y}$ decreases the positive signal for the target class. 
The extent of this reduction is determined by the text encoder, specifically $p_y^m$, and is independent of the training set distribution.
On the other hand, if $\mathcal{L}_O$ employs the existing logit adjustment method for long-tail learning, according to \cref{rem:LA}, it facilitates automatic gradient balancing.
However, there are errors in the labels. 
Gradients that are incorrectly labeled tail classes will be erroneously amplified, leading to misleading model gradient descent.
WTS introduces text-encoder-derived semantic constraints to attenuate class-specific positive correlations, thereby counteracting error signal amplification while preserving discriminative feature learning.

\begin{table*}[tb]
\small
\centering
\caption{Top-1 acc. (\%) on CIFAR-10/100-LTN with joint noise.
Res32 and Res18 are abbreviations for ResNet-32 and PreAct ResNet18, respectively. 
The best and the second-best results are shown in \underline{\textbf{underline bold}} and \textbf{bold}, respectively.}
\label{tab:cifar_jn}
\vspace{-6pt}
\resizebox{1.\textwidth}{!}{
\setlength\tabcolsep{12pt}  
\renewcommand{\arraystretch}{1.}
\begin{tabular}{l|ccc|ccc|ccc|ccc}
\hlinew{1pt}
Dataset     & \multicolumn{6}{c|}{CIFAR-10-LTN}   & \multicolumn{6}{c}{CIFAR-100-LTN}  \\
\hline
Imbalance Factor  & \multicolumn{3}{c|}{10}  & \multicolumn{3}{c|}{100}  & \multicolumn{3}{c|}{10} & \multicolumn{3}{c}{100}  \\ 
\hline
Noise Ratio  & 0.3 & 0.4 & 0.5  & 0.3 & 0.4 & 0.5 & 0.3 & 0.4  & 0.5 & 0.3 & 0.4 & 0.5\\ 
\hline
CE    & 72.4 & 70.3 & 65.2  &  52.9 &  48.1 &  38.7  & 37.4 & 32.9 & 26.2 & 21.8 & 17.9 & 14.2 \\ 
\cdashline{1-13}
LDAM-DRW\cite{Kaidi2019ldam} &  80.2 & 74.9 & 67.9  & 66.7 & 57.5 & 43.2 &  45.1 & 39.4 & 32.2  & 27.6 & 21.2 & 15.2 \\
NCM~\cite{decouple20} & 74.8 & 68.4 & 64.8 & 60.9 & 55.5 & 42.6 & 41.3 & 35.4 &  29.3 & 24.7 &  21.8 & 16.8 \\
MiSLAS\cite{liu2021improving}  & 83.4 & 76.2 & 72.5  & 67.9 & 62.0 & 54.5 & 50.0 & 46.1 & 40.6 & 32.8 & 27.0 & 21.8 \\ 
\cdashline{1-13}
Co-teaching~\cite{han2018Co-teachingt} & 68.7 & 57.1 & 46.8  & 38.0 & 30.8 & 22.9 & 36.1 & 32.1 & 25.3 & 22.0 & 16.2 & 13.5 \\
CDR~\cite{xia2020robust} & 73.9 & 68.1 & 62.2 & 46.3 & 42.5 & 32.4 & 35.4 & 30.9 & 24.9  & 22.0 & 17.3 & 13.6 \\
Sel-CL+\cite{li2022selective}& 84.4 & 80.4 & 77.3  & 65.7 & 61.4 & 56.2 & 50.9 & 47.6 & 44.9  & 35.1 & 32.0 & 28.6  \\ 
\cdashline{1-13}
RoLT~\cite{Wei2021robust} & 83.5 & 80.9  & 79.0 & 66.5 & 57.9 & 49.0  & 47.4 & 44.6 & 38.6  & 27.6 & 24.7 & 20.1 \\
RoLT-DRW\cite{Wei2021robust}  & 83.6 & 81.4  & 77.1  & 71.1 & 63.6 & 55.1 & 49.3 & 46.3 & 40.9  & 30.2 & 26.6 & 21.1 \\
HAR-DRW\cite{yi2022identifying} & 80.4 & 77.4 & 67.4 &  48.6 & 54.2 & 42.8 & 41.2 & 37.4 & 31.3  & 22.6 & 19.0 & 14.8  \\
RCAL~\cite{zhang2023rcal}  & 84.6 & 83.4 & 80.8  & 72.8 & 69.8 & 65.1  &  51.7 & 48.9 & 44.4  & 36.6 & 33.4 & 30.3 \\
ECBS-Res32~\cite{li2024extracting}  &  87.4 &  85.9 & 84.8 &  76.8 &  75.2 &   73.6  &  53.8 &  52.8 & 51.2  &  38.5 &  37.1 &  35.5 \\
ECBS-Res18~\cite{li2024extracting} &   89.1 &  87.7 & 85.6 &  78.0 &  76.5 &   72.9  &  60.1 &  58.2 & 55.3 &  43.0 &  39.9 &   39.1  \\
\cdashline{1-13}
CLIP (zero-shot)  & 87.2 & 87.2 & 87.2  & 87.2 &  87.2 &  87.2  &64.4 & 64.4 & 64.4 & 64.4 &64.4 & 64.4 \\
CLIP+CE  &  95.1 &  94.8 & 93.9  &  89.9 &  88.2 &   83.5 &  79.7 &  78.6 & 77.5  &  66.4 &  64.8 &   62.1  \\
\rowcolor{mygray}
CLIP+CE+WTS (ours) &  95.2 &  95.1 & 94.5  &  90.1 &  88.9 &   87.8 &  80.8 &  78.7 & 77.8  &  67.7 &  66.1 &   64.9  \\ 
CLIP+LA & \textbf{96.3} & \textbf{96.0}  & \textbf{95.3} &  \textbf{95.2} &  \textbf{94.0} &   \textbf{90.5} &  \textbf{82.0}  & \textbf{80.8}& \textbf{79.7}  &  \textbf{77.5} &  \textbf{76.6}&   \textbf{75.4}  \\
\rowcolor{mygray}
CLIP+LA+WTS (ours) & \underline{\textbf{96.5}} & \underline{\textbf{96.2}} &  \underline{\textbf{95.6}} & \underline{\textbf{95.6}} &  \underline{\textbf{95.2}} &   \underline{\textbf{91.9}}  &  \underline{\textbf{83.2}} & \underline{\textbf{81.2}} & \underline{\textbf{80.4}} & \underline{\textbf{77.8}} & \underline{\textbf{77.1}} &   \underline{\textbf{76.9}} \\      
\hlinew{1pt}
\end{tabular}
}
\end{table*}

\begin{table}[t]
\centering 
\caption{Top-1 acc. (\%) on CIFAR-10/100-LTN with an imbalance factor of 10 under symmetric and asymmetric noise.
}\label{tab:cifar_sn_an}
\
\resizebox{1.\linewidth}{!}{
\renewcommand{\arraystretch}{1.}
\begin{tabular}{l|cc|cc|cc}
\hlinew{1pt}
Dataset   & \multicolumn{2}{c|}{\small CIFAR-10-LTN}  & \multicolumn{2}{c|}{\small CIFAR-100-LTN} & \multicolumn{2}{c}{\small CIFAR-100-LTN}  \\
\hline
Noise Type  &  \multicolumn{4}{c|}{Symmetric} & \multicolumn{2}{c}{Asymmetric} \\
\hline
Noise Ratio & 0.4 & 0.6  & 0.4 & 0.6 & 0.2 & 0.4\\
\hline
CE  & 71.7 & 61.2  & 34.5 & 23.6 &  44.5 & 32.1\\ 
\cdashline{1-7}
LDAM \cite{Kaidi2019ldam} &  70.5 & 62.0  & 31.3 & 23.1 & 40.1 & 33.3\\ 
LA \cite{adjustment21} & 70.6 & 54.9 & 29.1 & 23.2 & 39.3 & 28.5\\ 
IB \cite{park2021influence} & 73.2 & 62.6 & 32.4 & 25.8 & 45.0 & 35.3 \\ 
\cdashline{1-7}
DivdeMix \cite{li2020Dividemix}&  82.7 & 80.2 & 54.7 & 45.0  & 58.1 & 42.0\\ 
UNICON~\cite{Karim2022Unicon}  &  84.3 & 82.3 & 52.3 & 45.9 & 56.0 & 44.7  \\ 
\cdashline{1-7}
MW-Net \cite{Shu2019mwnet} &  70.9 & 59.9 & 32.0 & 21.7 &  42.5 & 30.4 \\ 
RoLT \cite{Wei2021robust} & 81.6 & 76.6 & 42.0 & 32.6 &  48.2 & 39.3\\ 
HAR~\cite{yi2022identifying} & 77.4 & 63.8 & 38.2 & 26.1 & 48.5 & 33.2  \\ 
ULC~\cite{Huang2022ulc} &  84.5 & 83.3 & 54.9 & 44.7 & 54.5  & 43.2 \\
TABASCO~\cite{lu2023tabasco} &  85.5 & 84.8 & 56.5 & 46.0  & 59.4  & 50.5\\
ECBS~\cite{li2024extracting} &  86.4 & 83.9 & 56.7 & 48.1  & 60.5 & 52.1  \\
\cdashline{1-7}
CLIP (zero-shot) & 87.2 & 87.2 & 64.4 & 64.4 & 64.4 & 64.4\\ 
CLIP+CE  &  94.9 &  94.4 & 78.4 & 74.7  & 78.7 &  62.5\\
\rowcolor{mygray}
CLIP+CE+WTS (ours) & 95.2  & 95.0 & 78.9 & 75.9 & 79.1 & \textbf{67.4} \\
CLIP+LA  & \textbf{95.8}  & \textbf{95.2}  & \textbf{80.2} &  \textbf{76.8} & \textbf{79.3} & 67.2  \\
\rowcolor{mygray}
CLIP+LA+WTS (ours) & \underline{\textbf{96.1}} &  \underline{\textbf{95.3}} &  \underline{\textbf{80.7}} &  \underline{\textbf{78.0}} & \underline{\textbf{79.8}} & \underline{\textbf{69.5}}\\ 
\hlinew{1pt}
\end{tabular}
}
\end{table}

\section{Experiment}
\subsection{Basic Settings}

\noindent\textbf{Datasets and Implementation Details.}
We evaluate the proposed WTS on both simulated and real-world noisy long-tailed datasets, following TABASCO~\cite{lu2023tabasco} and RCAL ~\cite{zhang2023rcal}.
Specifically, synthetic scenarios are created based on CIFAR-10/100~\cite{krizhevsky2009learning}, real-world noise is introduced in mini-ImageNet~\cite{jiang2020red} (referred to as red Mini-ImageNet, abbreviated as Img-LTN$^r$), and WebVision-50~\cite{li2017webvision} is used with its inherent noisy labels.
For all constructed datasets, we first subsample a long-tailed version from the original dataset following the exponential decay pattern from prior works~\cite{zhang2023survey}, and then introduce label noise. 
The imbalance factor is defined as the ratio of the largest class size to the smallest.
Three types of noise—joint, symmetric, and asymmetric—are applied to CIFAR datasets.
To distinguish it from the original data, we appended "-LTN" to the constructed long-tailed noisy label dataset.
For model training, we use CLIP~\cite{radford2021clip} ViT-B/16 as the backbone and Adaptformer~\cite{Chen2022adaptformer} as the fine-tuning strategy. 
The optimizer is SGD with an initial learning rate of 0.01, a momentum of 0.9, and a weight decay of $5 \times 10^{-4}$.
The batch size is set to 128, and WTS is trained for 10 epochs across all datasets.
\begin{table*}[!t]
\centering
\caption{\small{Acc. (\%) on CIFAR-10/100-LTN. 
NR is 0.6.
JN, AN, and SN stand for joint, asymmetric, and symmetric noise, respectively.}
} \label{tab:cifar_all_06}
\resizebox{1.\linewidth}{!}{
\setlength\tabcolsep{3pt}
\renewcommand{\arraystretch}{1.}
\begin{tabular}{l|ccc|ccc|ccc|ccc}
\hlinew{1pt}
Dataset     & \multicolumn{6}{c|}{CIFAR-10-LTN}   & \multicolumn{6}{c}{CIFAR-100-LTN}  \\
\hline
Imbalance Factor  & \multicolumn{3}{c|}{10}  & \multicolumn{3}{c|}{100}  & \multicolumn{3}{c|}{10} & \multicolumn{3}{c}{100}  \\ 
\hline
Noise Type  & JN & AN & SN  & JN & AN & SN & JN & AN & SN & JN & AN & SN\\ 
\hline
CLIP+CE   & 91.2 & 19.6 & 94.4 & 71.4 & 22.4 &  87.1 &  74.6 & 21.6 & 74.7  &  58.4 & 23.5 &  62.0\\
\rowcolor{mygray}
CLIP+CE+WTS &  94.1 (\inc{2.9}) &  38.8 & 95.0 (\inc{0.6})  &  83.7 (\inc{12.3}) &  52.5 (\inc{30.1}) &   91.5 (\inc{4.4}) &  75.7 (\inc{1.1}) &  23.4  & 75.9 (\inc{1.2})  &  61.6 (\inc{3.2}) &  29.0  &   65.1 (\inc{3.1})  \\
CLIP+LDAM & 93.0 &1.9 & 94.8 & 75.8 & 11.4 & 87.5 &  74.8 & 7.2& 74.7  & 58.5 &16.5 & 60.0 \\
\rowcolor{mygray}
CLIP+LDAM+WTS  & 94.5 (\inc{1.5}) &  5.1  &  95.0 (\inc{0.2})  &  83.4 (\inc{7.6}) &   26.5   &   90.3 (\inc{2.8})  & 75.2 (\inc{0.4}) & 9.9  & 75.7 (\inc{1.0}) & 60.3 (\inc{1.8}) & 20.3 & 62.6 (\inc{2.6})  \\ 
CLIP+LA &93.9 & 56.7 & 95.2  &  85.9 & 63.1  &  91.7  &  78.4  & 47.1 &  76.8  & 73.7 & 45.7  &  66.8 \\
\rowcolor{mygray}
CLIP+LA+WTS & 94.2 (\inc{0.3}) & 63.8 (\inc{7.1}) & 95.7 (\inc{0.4}) &  88.2 (\inc{2.3}) &  74.7 (\inc{11.6}) & 94.2 (\inc{2.5})  &  79.0 (\inc{0.6}) & 48.8 (\inc{1.7})  & 78.0 (\inc{1.2}) & 74.3 (\inc{0.6}) & 49.5 (\inc{3.8}) & 70.5 (\inc{3.7}) \\ 
\hlinew{1pt}
\end{tabular}
}
\end{table*}

\noindent\textbf{Definition of Noise Types.}
\label{sec:noi_type}
For joint noise, each element $T^{JN}_{ij}$ in the transfer matrix $T^{JN}$ is defined as:
\begin{align}
T^{JN}_{ij} &= P^{JN}(\hat{y}=j\mid y=i)  
= \begin{cases} 
1-\gamma & \text{if } i=j \\ 
\frac{n_j}{N-n_i}\gamma & \text{if } i \neq j
\end{cases},
\end{align}
where $\hat{y}$ denotes the observed label, and $y$ is the ground-truth label. 
$\mathbb{\gamma}$ denotes the noise ratio.
$N$ is the total number of training samples, and $n_i$ is the number of training samples in class $i$.

For symmetric noise, the element $T^{SN}_{ij}$ in the transfer matrix is expressed as:
\begin{equation}
T^{SN}_{ij} = 
\begin{cases} 
\gamma \cdot \frac{1}{C} + (1 - \gamma), & \text{if } i = j \\
\gamma \cdot \frac{1}{C}, & \text{if } i \neq j 
\end{cases},
\end{equation}
where $C$ represents the total number of classes.

For asymmetric noise, the element $T^{AN}_{ij}$ in its  transfer matrix is given by:
\begin{equation}
T^{AN}_{ij} = 
\begin{cases} 
1-\gamma, & \text{if } i = j \\
\gamma \cdot P(i \rightarrow j), & \text{if } i \neq j
\end{cases},
\end{equation}
where $P(i \rightarrow j)$ is the probability of a sample with true label $i$ being mislabeled as $j$ and satisfies $\sum_{j=1}^C P(i \rightarrow j) = 1$. 
Similar to previous work~\cite{Shu2019mwnet, lu2023tabasco}, we adopt $P(i \rightarrow j)$, where only one element is 1 while all others are 0 in our experiments.

\subsection{Comparison Results}

\noindent \textbf{Comparison Methods.}
We compare our method with the following three types of approaches: 
(1) \textit{Long-tail (LT) learning methods}: LDAM~\cite{Kaidi2019ldam}, NCM~\cite{decouple20},  MiSLAS~\cite{liu2021improving}, logit adjustment (LA)~\cite{adjustment21}, and influence-balanced loss (IB)~\cite{park2021influence}. 
(2) \textit{Label-noise (LN) learning methods }: Co-teaching (CT)~\cite{han2018Co-teachingt}, CDR~\cite{xia2020robust}, Sel-CL~\cite{li2022selective} DivideMix~\cite{li2020Dividemix}, and UNICON~\cite{Karim2022Unicon}. 
(3) \textit{Long-tailed noisy label (LTNL) learning}: MW-Net~\cite{Shu2019mwnet}, ROLT~\cite{Wei2021robust}, HAR~\cite{yi2022identifying}, ULC~\cite{Huang2022ulc},  TABASCO~\cite{lu2023tabasco}, RCAL~\cite{zhang2023rcal} and ECBS~\cite{li2024extracting}.

\noindent\textbf{Results on CIFAR-10/100-LTN.}
\cref{tab:cifar_jn,tab:cifar_sn_an} compare joint versus symmetric/asymmetric noise results on CIFAR-10/100-LTN.
We observe that directly applying the LT learning method can achieve improvement to a certain extent. 
The logit adjustment-based method performs slightly better. 
NCM requires resampling the data distribution based on class labels, but the presence of noisy labels leads to unreasonable resampling, limiting its performance improvement.
Under joint noise, LN learning methods, such as Sel-CL+~\cite{li2022selective}, outperform LT learning methods. 
For symmetric and asymmetric noise, recently proposed noise learning techniques can achieve satisfactory performance. 
However, these two kinds of methods are less effective when the noise ratio is high.
For example, under joint noise, when the imbalance factor (IF) of CIFAR-100-LTN is 100 and the noise ratio (NR) is 0.5, MiSLAS and Sel-CL+ achieve accuracies of 21.8\% and 28.6\%, respectively. 
While these are significantly higher than CE (14.2\%), they still fall short of meeting practical requirements for usage.

\cref{tab:cifar_all_06} presents the performance comparison of various methods on CIFAR-10/100-LTN under an NR of 0.6, representing an extremely high-noise regime.
In this challenging setting, observed labels are highly unreliable, leading to poor performance from all methods under asymmetric noise conditions. 
The experimental results demonstrate that incorporating WTS can significantly enhance the performance of all methods, especially under asymmetric and symmetric noise.
For example, WTS improves the CLIP+LA method by more than 10\%, achieving 74.7\% compared to 63.1\% on the CIFAR-10-LTN dataset with an imbalance factor of 100.

Recent proposed LTNL learning methods, including RCAL~\cite{zhang2023rcal}, TABASCO~\cite{lu2023tabasco} and ECBS~\cite{li2024extracting}, have shown enhanced robustness across various noise ratios. 
However, there remains room for further improvement, particularly on more challenging datasets. 
For example, on CIFAR-100 with an IF of 10 and NR of 0.4, ECBS achieves accuracies of 58.2\%, 56.7\%, and 52.1\% under three types of label noise, respectively.
In comparison, the proposed WTS achieves 81.2\%, 80.7\%, and 69.5\%, highlighting the effectiveness of the CLIP-introduced prior and the text-based knowledge in WTS.
These results demonstrate the generalization capability of WTS across different noise types and its robustness under high-noise conditions.

\begin{figure}[t]    
\centering
\begin{minipage}[c]{0.475\linewidth}
\centering 
\captionof{table}{Top-1 acc. (\%) on Img-LTN$^r$ with NR of 0.4.
}
\label{tab:red_img}
\resizebox{1.\linewidth}{!}{
\setlength{\tabcolsep}{4pt}
\begin{tabular}{l|cc}
\hlinew{1.pt}
Imbalance Factor  & 10 & 100  \\ 
\hline
CE  & 31.5  &  31.5 \\
\cdashline{1-3} 
LDAM~\cite{Kaidi2019ldam}  &   23.5 & 15.6  \\ 
LA~\cite{adjustment21}  & 25.9 & 9.6 \\ 
IB~\cite{park2021influence}   & 22.1 & 16.3    \\ 
\cdashline{1-3} 
DivdeMix~\cite{li2020Dividemix}  & 49.0 & 34.7 \\ 
UNICON \cite{Karim2022Unicon}  & 41.6 & 31.1\\ 
\cdashline{1-3} 
MW-Net \cite{Shu2019mwnet}  &   40.3 & 31.1   \\ 
RoLT \cite{Wei2021robust}   &  24.2  & 16.9 \\ 
HAR~\cite{yi2022identifying}  &  38.7  &31.3   \\ 
ULC~\cite{Huang2022ulc}   & 47.1 & 34.8  \\
TABASCO~\cite{lu2023tabasco}   &  49.7 & 37.1 \\
ECBS~\cite{li2024extracting}   &  50.8 & 36.9  \\
\cdashline{1-3} 
CLIP (zero shot)  & 77.1  &  77.1 \\
CLIP+CE  & 82.9 & 80.5  \\
\rowcolor{mygray}
CLIP+CE+WTS   & \underline{\textbf{83.3}} & \underline{\textbf{81.3}} \\
CLIP+LA   & 81.9 & 79.5 \\
\rowcolor{mygray}
CLIP+LA+WTS & \textbf{83.1} &  \textbf{80.9}  \\ 
\hlinew{1pt}
\end{tabular}
}
\end{minipage}
\hspace{1pt}
\begin{minipage}[c]{0.5\linewidth}
\centering
\captionof{table}{Top-1 acc. (\%) on WebVision-50.} 
\label{tab:webvision_img}
\resizebox{1.\linewidth}{!}{
\renewcommand{\arraystretch}{1.15}
\setlength\tabcolsep{1pt}
\begin{tabular}{l|cc}
\hlinew{1.1pt}
Train   & \multicolumn{2}{c}{WebVision-50}   \\
\hline
Test    &WV50\footnotemark[3] & IMG12\footnotemark[3] \\
\hline
CE      & 62.5  & 58.5 \\ 
\cdashline{1-3} 
CT~\cite{han2018Co-teachingt}  & 63.6 & 61.5 \\
MentorNet~\cite{2018Mentornet}  & 63.0  & 57.8  \\
ELR+~\cite{liu2020elr} & 77.8  &  70.3 \\
MoPro~\cite{junnan2021mopro}  & 77.6  & 76.3 \\
NGC~\cite{WU2021ngc}  &  79.2 & 74.4 \\ 
Sel-CL+~\cite{li2022selective}   &  80.0 & 76.8  \\ 
RCAL+~\cite{zhang2023rcal}  & 79.6 & 76.3  \\
ECBS~\cite{li2024extracting}  &  80.0 & 76.1   \\  
\cdashline{1-3}
CLIP (zero-shot)  & 74.5  & 78.0 \\ 
CLIP+CE   & 83.4 & 83.8   \\
\rowcolor{mygray}
CLIP+CE+WTS & 83.5    &  83.6  \\ 
CLIP+LA   & \textbf{85.2}   & \textbf{84.1} \\
\rowcolor{mygray}
CLIP+LA+WTS & \underline{\textbf{85.2}}  & \underline{\textbf{84.2} }  \\
\hlinew{1.1pt}
\end{tabular}
}
\end{minipage}
\vspace{-1.5em}
\end{figure}

\begin{table*}[t]
\centering
\caption{Top-1 accuracy (\%) on Red mini-ImageNet dataset with real-world noise. }
\resizebox{1\linewidth}{!}{
\setlength\tabcolsep{2.5pt}
\renewcommand{\arraystretch}{1.2}
\begin{tabular}{l|cccccc|cccccc}
\hlinew{1pt}
Imbalance Factor  & \multicolumn{6}{c|}{10}   & \multicolumn{6}{c}{100}  \\ 
\hline
Noise Ratio  & 0.1 & 0.2 & 0.3 & 0.4 & 0.5 & \multicolumn{1}{c|}{0.6}  & 0.1 & 0.2 & 0.3 & 0.4 & 0.5 & 0.6  \\
\hline
CLIP (zero-shot) & 77.1 & 77.1 & 77.1 & 77.1 & 77.1 & 77.1  & 77.1 & 77.1 & 77.1 & 77.1 &  77.1 & 77.1  \\
\cdashline{1-13}
CLIP+CE & 85.7 &  84.4 & 84.1 &  82.9 & 80.9 & 79.6  &   \textbf{84.0} &  \textbf{82.6} & 81.3 &  80.5 &   79.7  & 78.4   \\
\rowcolor{mygray}
CLIP+CE+WTS & \underline{\textbf{86.1}} (\inc{0.4})   &  \textbf{85.3} (\inc{0.9})  & \textbf{85.1} (\inc{1.0})  & \underline{\textbf{83.3}} (\inc{0.4}) & 82.8 (\inc{1.9}) & \underline{\textbf{82.8}} (\inc{3.2}) & \underline{\textbf{84.2}} (\inc{0.2}) & \underline{\textbf{82.7}} (\inc{0.1}) & \underline{\textbf{82.6}} (\inc{1.3}) & \underline{\textbf{81.3}} (\inc{0.8}) &\textbf{81.0} (\inc{1.3}) & \underline{\textbf{80.6}} (\inc{2.2})  \\
\cdashline{1-13}
CLIP+LDAM & 84.8 &  83.4 &  82.7 &  81.2  & 79.8 & 77.9 & 80.2  &  78.3 &  77.8 & 77.3 &  75.5  & 74.2  \\
\rowcolor{mygray}
CLIP+LDAM+WTS &  84.8 (0.0) &  84.1 (\inc{1.7})&  82.8 (\inc{0.1}) & 82.4 (\inc{1.2}) & 81.5 (\inc{1.7}) & 80.4 (\inc{2.5}) & 80.2 (0.0) &  78.9  (\inc{0.6})&  78.3 (\inc{0.5})&  77.6  (\inc{0.3})& 77.8  (\inc{2.3})  & 76.1  (\inc{1.9}) \\
CLIP+LADE & 84.4 &  83.6 & 83.1 & 81.9  & 81.1 & 79.7 & 79.9 &  80.5 &  79.1 & 78.9 &  78.4  &77.8 \\
\rowcolor{mygray}
CLIP+LADE+WTS  & 84.8 (\inc{0.4}) & 84.2 (\inc{0.6}) & 83.4 (\inc{0.3})& 83.0 (\inc{1.1}) & \textbf{83.5} (\inc{2.4}) & 82.2 (\inc{2.5}) & 80.4 (\inc{0.5}) & 81.5 (\inc{1.0}) & 79.6 (\inc{0.5}) &  79.7 (\inc{0.8}) &  80.3 (\inc{1.9})  & \textbf{79.3} (\inc{1.5})  \\
CLIP+LA &  85.1 &  84.5 &  83.1 &  81.9  & 81.1 &79.0  & 81.3 &  81.0 &  79.9 &  79.5 &  78.9  & 77.6    \\
\rowcolor{mygray}
CLIP+LA+WTS  & \textbf{86.0} (\inc{0.9}) & \underline{\textbf{85.9}} (\inc{1.4}) & \underline{\textbf{85.7}} (\inc{2.6}) & \textbf{83.1} (\inc{1.2}) & \underline{\textbf{83.6}} (\inc{2.5}) & \textbf{82.5} (\inc{3.5}) & 83.4 (\inc{2.1}) & 82.6 (\inc{1.6}) &  \textbf{81.5} (\inc{1.6}) &  \textbf{80.9} (\inc{1.4}) &  \underline{\textbf{81.4}} (\inc{2.5})  & \underline{\textbf{80.6}} (\inc{3.0}) \\
\hlinew{1pt}
\end{tabular}
}
\label{tab:img}
\end{table*}

\begin{figure*}[t]
\centering 
\begin{subfigure}{0.33\linewidth}
    \includegraphics[width=1\linewidth,height=0.5\linewidth]{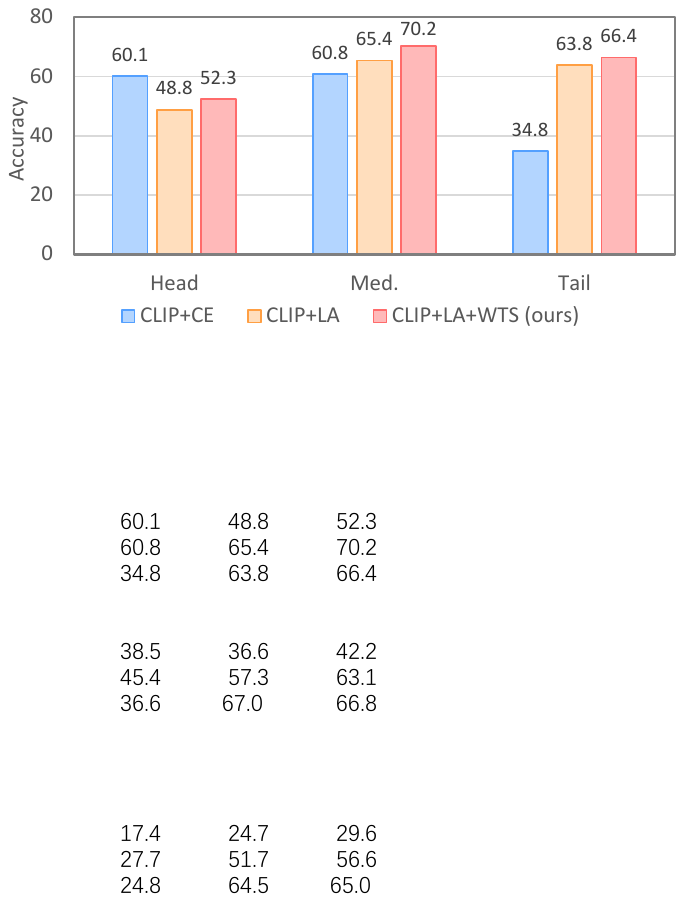} 
    \caption{NR = 0.4} 
    \label{fig:acc40} 
\end{subfigure}
\begin{subfigure}{0.33\linewidth}
    \includegraphics[width=1\linewidth,height=0.5\linewidth]{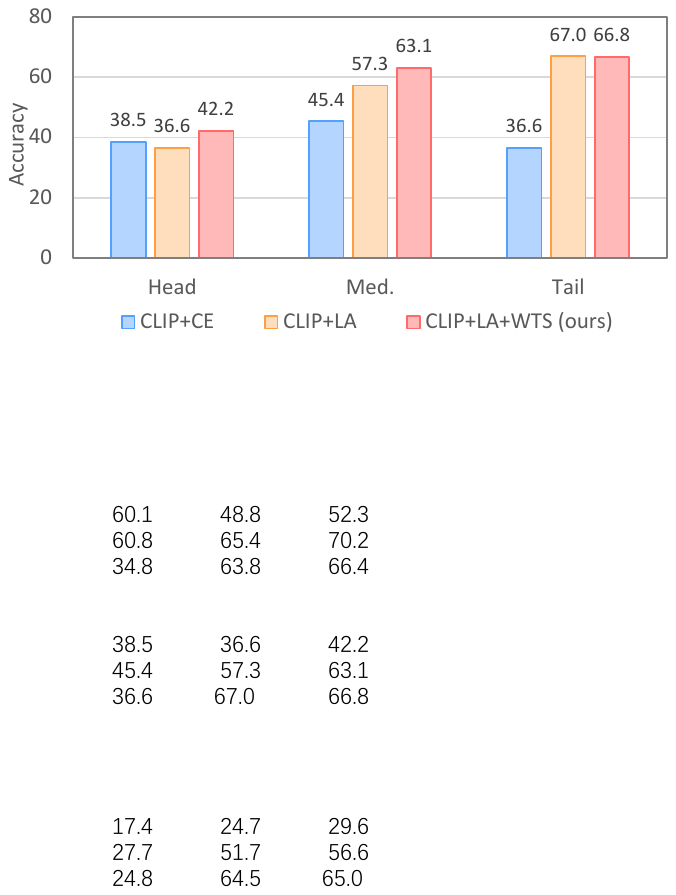} 
    \caption{NR = 0.5} 
    \label{fig:acc50} 
\end{subfigure}
\begin{subfigure}{0.33\linewidth}
    \includegraphics[width=1\linewidth,height=0.5\linewidth]{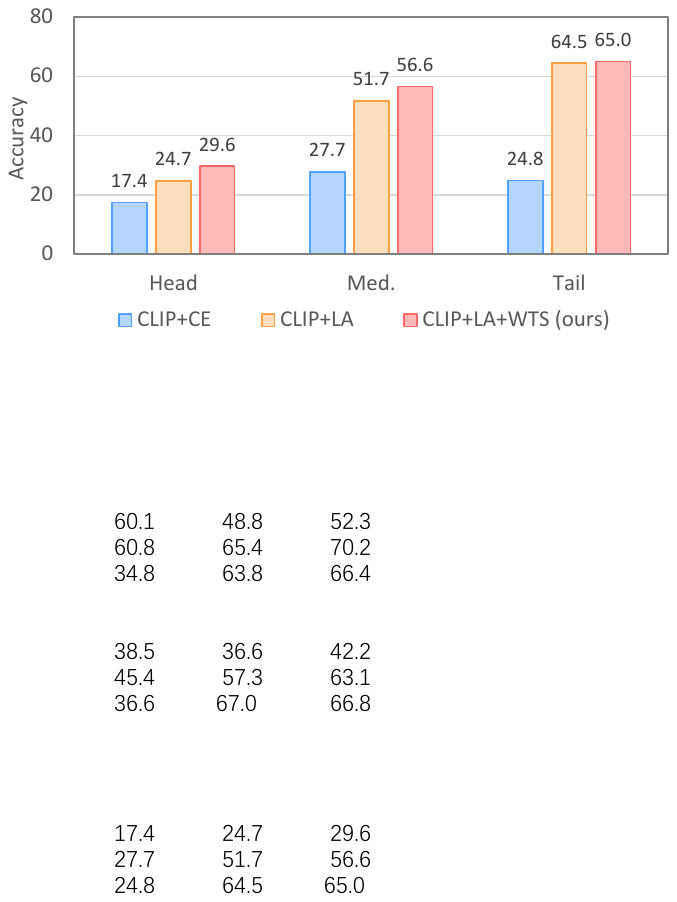} 
    \caption{NR = 0.6} 
    \label{fig:acc60} 
\end{subfigure}
\caption{Accuracy of different class types. (CIFAR100-LTN with IR of 100 and asymmetric noise)} 
\label{fig:acc_cls_types}
\end{figure*}

\begin{figure}[t]
\centering
    \includegraphics[width=1.\linewidth]{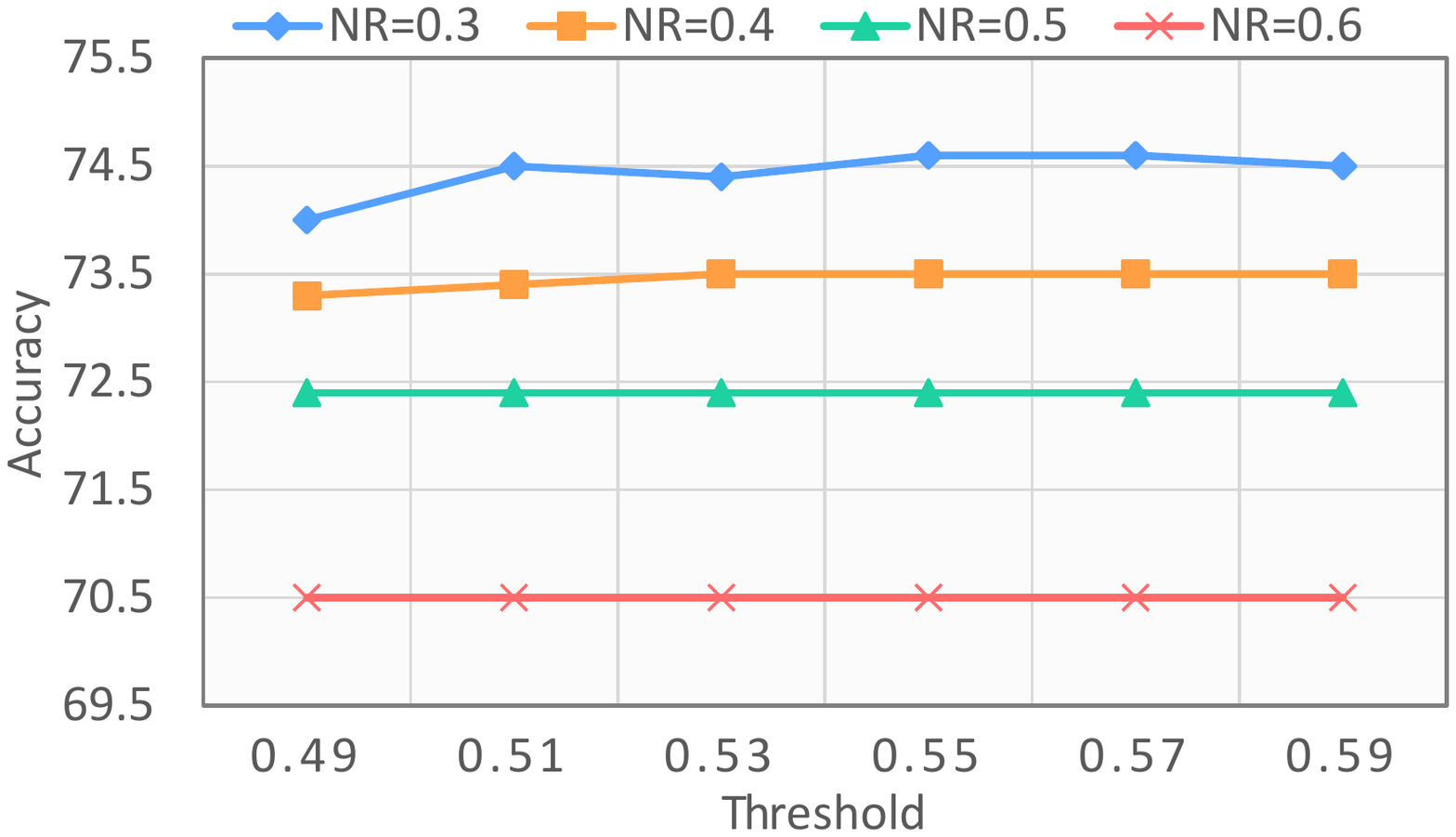}  
    \caption{Ablation of $\tau$ in supervision switch. 
   The dataset is CIFAR-100-LTN with IR=100 and symmetric noise.
    } \label{fig:abl_tao} 
    \vspace{-0.5em} 
\end{figure}

\noindent \textbf{Results on Real-World Datasets.}
\cref{tab:red_img} reports the performance on the test set of Img-LTN$^r$.
It can be observed that with the noise ratio reaching 0.4, directly applying LT learning methods can lead to adverse effects.
Similar to the results on the CIFAR-10/100-LTN datasets, the LN learning method shows effectiveness on Img-LTN$^r$ with a low imbalance ratio. 
In contrast, the LTNL learning method demonstrates a more significant improvement. 
For instance, when IF is 100, TABASCO~\cite{lu2023tabasco} achieves a top-1 classification accuracy of 37.1\%, compared to 34.7\% achieved by DivideMix~\cite{li2020Dividemix}.
In comparison, WTS exceeds 80\%.
Under real-world noisy label conditions, LA also negatively impacts the CLIP fine-tuned model, with CLIP+LA reducing CE from 82.9\% to 81.9\% at an imbalance ratio of 10 for example. 
In contrast, WTS enables the effective use of LA, further boosting performance to 83.1\%.

WebVision-50 is derived from real-world datasets with NR of 0.05, out-of-distribution ratio 0.24~\cite{Albert2022oodLN} and IF of 6.78. 
Since the NR is low, LA can be applied directly, and the correction effect of WTS is less evident. 
However, WTS does lead to a slight improvement in cross-dataset test results, demonstrating an enhancement in the generalization ability of the models.

\footnotetext[3]{WV50 and IMG12 are abbreviated for WebVision-50 and ILSVRC12~\cite{alex2012imagenet}, respectively.}

\subsection{Further Analysis}
\label{sec:Fur_Ana}



\noindent\textbf{Impact of WTS on Different Classes.}
We conduct experiments to demonstrate that WTS can enhance the performance of all classes. 
\cref{fig:acc_cls_types} shows the top-1 classification accuracy across different class types.
As shown in the figures, compared to the basic CE loss, LA improves tail-class performance at noise ratios of 0.4 and 0.5, with a slight trade-off in head-class performance. 
Building on LA, WTS further improves the performance of all classes.

\noindent\textbf{The Influence of Overlap Ratio Control Threshold $\tau$.}
The supervision switch control in \cref{sec:WTS} relies on a hyper-parameter $\tau$.
We conduct an ablation study on the parameter $\tau$ to examine its impact on model training.
\cref{fig:abl_tao} shows the results.
When the noise ratio is low (e.g., 0.3 or 0.4), the choice of parameter $\tau$ influences model performance. 
In such cases, the reliability of observed labels is relatively higher
It is crucial to assess whether supervision from the weak teacher may teacher introduce uncertainty, potentially impacting training. 
Therefore, determining whether to trust the label $\mathcal{Y}^t$, which depends on $\tau$, becomes essential.  
\cref{fig:abl_tao} shows minimal fluctuation, demonstrating that WTS remains stable and informative across different values of $\tau$.
When NR is high, the reliability of observed labels decreases, making the supervision signal from the weak teacher more critical.
The observed label set $\mathcal{Y}^o$ deviates significantly from $\mathcal{Y}^t$, further emphasizing the necessity of guidance from the weak teacher
Consequently, during training, the control switch remains active, ensuring that model performance is largely unaffected by $\tau$.

\begin{table}[!t]
\centering 
\caption{Acc. (\%) of CLIP (zero-shot) on the training set of CIFAR-100-NLT.
}\label{tab:cifar100-train}
\resizebox{0.9\linewidth}{!}{
\setlength\tabcolsep{9pt}
\renewcommand{\arraystretch}{1.}
\begin{tabular}{c|cccc}
\hlinew{0.8pt}
\hline
Imbalance Factor  & Head & Med. & Tail & All \\
\hline
10 & 66.5 & 64.1  & 66.1& 65.5 \\
\hline
100 & 66.8 & 64.0  & 60.6& 64.0 \\
\hlinew{1pt}
\end{tabular}
}
\end{table}

\noindent\textbf{CLIP performance on training set.}
\label{sec:cifar_train}
\cref{tab:cifar100-train} presents the performance of CLIP on the training set of CIFAR-100-LTN. 
Since the predictions obtained by CLIP are independent of class labels, they remain unaffected by label noise and class distribution biases.
However, the predicted labels generated by the text encoder of CLIP are not sufficiently accurate.
We therefore characterize the textual supervision from CLIP as a weak supervisory signal that, while imperfect, provides valuable guidance for model training under noisy conditions.

\section{Concluding Remarks}
In this work, we proposed a label calibration method, WTS, to tackle the compounded challenges of noisy labels and long-tailed distributions in real-world data. 
WTS leverages auxiliary language information from pre-trained visual-language models to correct label misalignment.
By calibrating the supervisory signal, WTS enables effective feature learning and ensures that valuable category information is preserved, even in high-noise scenarios. 
This approach shows significant improvements in model performance across various benchmarks, particularly under challenging noise conditions.
Despite WTS being effective in most scenarios, its supervision activation relies on an empirically chosen hyperparameter. 
In simple noise environments, this dependency may occasionally lead WTS to provide misleading signals, potentially impacting model performance.
Our future work will focus on developing a more reasonable parameter selection to overcome this limitation.


\section*{Acknowledgements}
This work was supported in parts by NSFC (62306181), Guangdong Basic and Applied Basic Research Foundation (2024A1515010163), Shenzhen Science and Technology Program (RCBS20231211090659101, KJZD2024090 3100022028), National Key Lab of Radar Signal Processing (JKW202403), and Scientific Development Funds from Shenzhen University.


{\small
\bibliographystyle{cvm}
\bibliography{cvmbib}
}

\end{document}